\newcommand{\noun}[1]{\textsc{#1}}
\providecommand{\algorithmname}{Algorithm}
 \theoremstyle{definition}
  \newtheorem{example}{\protect\examplename}
  \theoremstyle{definition}
  \newtheorem{problem}{\protect\problemname}
  \theoremstyle{definition}
  \newtheorem{defn}{\protect\definitionname}
\theoremstyle{plain}
\newtheorem{thm}{\protect\theoremname}
  \theoremstyle{plain}
  \newtheorem{lem}{\protect\lemmaname}
\algnewcommand{\LineComment}[1]{\State \(\triangleright\) #1}
\newtheorem{counter}{Counterexample}
  \providecommand{\definitionname}{Definition}
  \providecommand{\examplename}{Example}
  \providecommand{\lemmaname}{Lemma}
  \providecommand{\problemname}{Problem}
\providecommand{\theoremname}{Theorem}
\begin{document}

\title{Even more generic solution construction in Valuation-Based Systems}

\author{Jordi Roca-Lacostena\and Jesus Cerquides %
\thanks{IIIA - CSIC, Campus UAB, Spain, email: \{jroca, cerquide\}@iiia.csic.es%
}}
\maketitle
\begin{abstract}
Valuation algebras abstract a large number of formalisms for automated
reasoning and enable the definition of generic inference procedures.
Many of these formalisms provide some notions of solutions. Typical
examples are satisfying assignments in constraint systems, models
in logics or solutions to linear equation systems. 

Recently, formal requirements for the presence of solutions and a
generic algorithm for solution construction based on the results of
a previously executed inference scheme have been proposed \cite{Pouly2011a,Pouly2011c}.
Unfortunately, the formalization of Pouly and Kohlas relies on a theorem
for which we provide a counter example. In spite of that, the mainline
of the theory described is correct, although some of the necessary
conditions to apply some of the algorithms have to be revised. To
fix the theory, we generalize some of their definitions and provide
correct sufficient conditions for the algorithms. As a result, we
get a more general and corrected version of the theory presented at
\cite{Pouly2011a,Pouly2011c}.
\end{abstract}

\section{Introduction}

Solving discrete optimization problems is an important and well-studied
task in computer science. One particular approach to tackle them is
known as dynamic programming \cite{Bertele1972} and can be found
in almost every handbook about algorithms and programming techniques.
The works of Bellman \cite{Bellman1957}, Nemhauser \cite{Nemhauser1966}
and Bertelè and Brioschi \cite{Bertele1972} present non-serial dynamic
programming as an algorithm for optimization problems for functions
taking values in the real numbers. A more general approach was taken
by Mitten \cite{Mitten1964a} and further generalized by Shenoy in
1996 \cite{Shenoy1996}, for functions taking values in any ordered
set $\Delta$. Shenoy introduces a set of axioms that later on will
be known as valuation algebras. In those terms, Shenoy is the first
one to connect the concept of solution with the marginalization operation
of the valuation algebra. 

In 2011, Pouly and Kohlas \cite{Pouly2011a,Pouly2011c} drop the assumption
that valuations are functions that map tuples into a value set $\Delta.$
They present several algorithms, and characterize the sufficient conditions
for its correctness. Pouly and Kohlas' algorithms are more general
than their predecessors in the literature. This increased generality
comes at no computational cost, since when applied in the previously
covered scenarios, their particularization coincides exactly with
the previously proposed algorithm. Furthermore, by dropping the assumption
that valuations are functions, their algorithms can be applied to
previously uncovered cases such as the solution of linear equation
systems or the algebraic path problem. Unfortunately, one of the fundamental
results in Pouly and Kohlas' theory is incorrect. 

The contributions of this work are:
\begin{enumerate}
\item We provide a counterexample that invalidates Pouly and Kohlas' results.
\item We generalize the problem solved by Pouly and Kohlas, and provide
and algorithm to solve it.
\item We provide a new sufficient condition for the correctness of the algorithm.
\end{enumerate}
These results provide the most general theory for dynamic programming
up-to-date.
\begin{quotation}
\end{quotation}

\section{Background}

 In this section we start by defining valuation algebras. Later on,
we introduce the marginalization problem and finally we review the
\noun{Collect} algorithm to solve that problem.

The basic elements of a valuation algebra are so-called \emph{valuations},
that we subsequently denote by lower-case Greek letters such as $\phi$
or $\psi.$ Let $D$ be a lattice\cite{Gratzer2011} with a partial
order $\leq,$ two operations meet $\wedge$ and join $\vee,$ a top
element $\top,$ and a bottom element $\bot.$ Given a set of valuations
$\Phi,$ and a lattice of domains $D$, a valuation algebras has three
operations:
\begin{enumerate}
\item \emph{Labeling: $\Phi\rightarrow D;\phi\mapsto d(\phi),$}
\item \emph{Combination: $\Phi\times\Phi\rightarrow\Phi;(\phi,\psi)\mapsto\phi\otimes\psi,$}
\item \emph{Projection: $\Phi\times D\rightarrow\Phi;(\phi,x)\mapsto\phi^{\downarrow x}$
}for \emph{$x\leq d(\phi).$}
\end{enumerate}
satisfying the following axioms:
\begin{description}
\item [{A1}] \emph{Commutative semigroup}: $\Phi$ is associative and commutative
under $\otimes.$
\item [{A2}] \emph{Labeling: }For $\psi,\phi\in\Phi,$ $d(\phi\otimes\psi)=d(\phi)\vee d(\psi).$
\item [{A3}] \emph{Projection: }For $\phi\in\Phi,$ $x\in D,$ and $x\leq d(\phi),$
$d(\phi^{\downarrow x})=x.$
\item [{A4}] \emph{Transitivity: }For $\phi\in\Phi$ and $x\leq y\leq d(\phi),$
$(\phi^{\downarrow y})^{\downarrow x}=\phi^{\downarrow x}.$
\item [{A5}] \emph{Combination: }For $\phi,\psi\in\Phi$ with $d(\phi)=x$,
$d(\psi)=y$, and $z\in D$ such that $x\leq z\leq x\vee y,$ $(\psi\otimes\phi)^{\downarrow z}=\phi\otimes\psi^{\downarrow z\wedge y}.$
\item [{A6}] \emph{Domain: }For $\phi\in\Phi$ with $d(\phi)=x,$ $\phi^{\downarrow x}=\phi.$
\end{description}
We say that valuation $e\in\Phi$ is an identity valuation provided
that $d(e)=\bot$ and $\phi\otimes e=\phi$ for each $\phi\in\Phi.$
As proven in \cite{Pouly2011a}, any valuation algebra that does not
have and identity valuation can easily be extended to have one. In
the following and without loss of generality we assume that our valuation
algebra has an identity valuation $e.$

\paragraph{Variable systems, frames and tuples.}

In most practical applications of valuation algebras, the domains
of the valuations are subsets of a given set of variables $V$. It
is well known (e.g. \cite{Davey2008} page 36) that for any set $V,$
the ordered set $\langle\mathcal{P}(V),\subseteq\rangle$ is a complete
lattice, referred to as the \emph{power set lattice}. Thus, most of
the work on valuation algebras assumes that the lattice $D$ is the
power set lattice of a set $V$ of variables. 

Let $V=\{x_{1},\ldots,x_{n}\}$ be a finite set of variables%
\footnote{All the definitions are correct not only for finite but also for countable
$V$%
}. We assume that for each variable $x\in V$ we can assign a set $\Omega_{x}$
of possible values, called its \emph{frame.} Similarly, the frame
of $X\subseteq V$ is $\Omega_{X}=\prod_{x\in X}\Omega_{x}.$ It is
mathematically convenient to include a singleton element (noted as
$\diamond$) in $\Omega_{\emptyset}.$ Thus, $\Omega_{\emptyset}=\{\diamond\}.$
A pair $\langle V,\Omega\rangle$ is known as a \emph{variable system.}
In many cases a variable system is naturally linked to a valuation
algebra. Then we say that the valuation algebra is equipped with a
variable system. A typical example is when valuations are discrete
real functions and the domain of a valuation is the set of discrete
variables over which the function is defined.

{} A \emph{tuple} $\mathbf{x}$ with finite domain $X\subseteq V$ is
an element of $\Omega_{X}.$ A projection operation can be defined
on tuples, unrelated to the projection operation of the valuation
algebra. Given a tuple $\mathbf{x}$ with domain $X$ and $Y\subseteq X$
we define the projection of $\mathbf{x}$ to $Y$ as the tuple $\mathbf{y}$
that results from $\mathbf{x}$ by discarding the values of the variables
in $X-Y.$ We note the projection of $\mathbf{x}$ to $Y$ as ${\bf x}^{\downarrow Y}.$
We can write $\mathbf{x}=(\mathbf{x}^{\downarrow Y},\mathbf{x}^{\downarrow X-Y})$.
Furthermore, $\mathbf{x}=(\mathbf{x},\diamond)=(\diamond,\mathbf{x})$.

\begin{example}
\label{ex:ValAlgBooleanFunctions}%

Given a set of binary variables $V$, we consider its power set lattice
as the domain lattice of the valuation algebra. The set of valuations
is composed by all the functions $\phi:\Omega_{X}\rightarrow\{0,1\}$
, where $X\subseteq V$. The labeling operation is defined by $d(\phi)=X$.
The combination of two valuations $\phi,\psi,$ is the valuation \emph{$(\phi\otimes\psi)(\mathbf{x})=\phi(\mathbf{x}^{\downarrow d(\phi)})+\psi(\mathbf{x}^{\downarrow d(\psi)}),$}
whereas the projection of a valuation $\phi$ with $d(\phi)=X$ to
a domain $Y\subseteq X$ is the valuation \emph{$\phi^{\downarrow Y}(\mathbf{y})={\displaystyle \max{}_{\mathbf{z}\in\Omega_{X-Y}}\phi(\mathbf{y},\mathbf{z}).}$
}As proven in \cite{Pouly2011a} the valuation algebra of Boolean
functions satisfies axioms A1-A6.

Some other relevant examples of valuation algebras are relational
algebra, which is fundamental to databases, or the algebra of probability
potentials, which underlies many results in probabilistic graphical
models and the more abstract class of semiring induced valuation algebras
\cite{Kohlas2008}.
\end{example}

\subsection{Finding the marginal of a factorized valuation}

A relevant problem in many valuation algebras in the problem of finding
the marginal of a factorized valuation. 
\begin{problem}
\label{pbm:Marginalization}Let $(\Phi,D)$ be a valuation algebra,
and $\phi_{1},\ldots,\phi_{n}$ be valuations in $\Phi$. Find $\left(\phi_{1}\times\dots\times\phi_{n}\right)^{\downarrow X}.$
\end{problem}
Note that when our valuations are probability potentials, this is
the well studied problem of finding the marginal of a factorized distribution,
also known as Markov Random Field. 

The \noun{Fusion} algorithm \cite{Shenoy1996} (a.k.a. variable elimination)
or the \noun{Collect} algorithm (a.k.a. junction tree or cluster tree
algorithm)\cite{Pouly2011a,Pouly2011c} can be used to find marginals.
Since our results build on top of the \noun{Collect} algorithm, we
provide a more accurate description below.

A necessary condition to apply the \noun{Collect} algorithm is that
we can organize the valuations $\phi_{1},\ldots,\phi_{n}$ into a
covering join tree, which we introduce next. 

\begin{defn}
A labeled tree is any tree $(V,E)$ together with a function $\lambda:V\rightarrow D$
that links each node with a single domain in $D$.

A join tree is a labeled tree $\mathcal{\mathcal{T}}=(V,E,\lambda,D)$
such that for any $i,j\in V$ it holds that $\lambda(i)\wedge\lambda(j)\leq\lambda(k)$
for all nodes $k$ on the path between $i$ and $j$. In that case,
we say that $\mathcal{T}$ satisfies the running intersection property.
\end{defn}

\begin{defn}
Given a valuation $\phi=\phi_{1}\times\dots\times\phi_{n}$ we say
that a join tree $\mathcal{\mathcal{T}}=(V,E,\lambda,D)$ is a covering
join tree for this factorization if $|V|=n$ and for all $\phi_{i}$
there is a node $j\in V$ such that $d(\phi_{i})\leq\lambda(j)$ .
\end{defn}

\begin{defn}
Let $i$ be a node in a rooted junction tree whose root is $r$. We
use $p(i)$ to denote the parent of $i$ in the tree. The separator
of $i$ is $s_{i}=\begin{cases}
\bot & \mbox{, if }i=r\\
\lambda(i)\wedge\lambda(p(i)) & \mbox{, otherwise}
\end{cases}$ 
\end{defn}

Algorithm \ref{alg:Collect} provides a description of the \noun{Collect}
algorithm. It is based on sending messages up the tree, through the
edges of the covering join tree, until the root node is reached. The
message sent from each node summarizes the information in the corresponding
subtree which is relevant to its parent. The running intersection
property guarantees that no information is lost.

\begin{algorithm}
{\footnotesize{Each node $i$ of the junction tree executes}}{\footnotesize \par}

{\footnotesize{\begin{algorithmic}[1]
\LineComment{Assess the product of valuations assigned to $i$}
\State $\psi_i \leftarrow e \times \prod_{j\in a^{-1}(i)}\phi_j$ 
\State From each child $j$ of $i$, receive a message $\mu_{j\rightarrow i}.$
\LineComment{Incorporate messages from children}
\State $\psi'_i \leftarrow  \psi_i\times \prod_j \mu_{j \rightarrow i}.$ 
\If {$i$ is not the root}
	\State Send message $\psi_i^{\downarrow s_i}$ to its parent $p(i)$
\EndIf
\end{algorithmic}}}{\footnotesize \par}

\caption{\label{alg:Collect}\noun{Collect} algorithm}
\end{algorithm}

\begin{thm}
After running the \noun{Collect} algorithm (Algorithm \ref{alg:Collect})
over the nodes of a covering join tree for $\phi=\prod_{k}\phi_{k}$,
we have that $\psi'_{i}=\left(\prod_{j\in\mathcal{T}_{i}}\psi_{j}\right)^{\downarrow\lambda(i)}.$
In particular, if $r$ is the root $\psi'_{r}=\phi^{\downarrow\lambda(r)}.$
\end{thm}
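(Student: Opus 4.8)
The plan is to prove the identity $\psi'_i = \bigl(\prod_{j \in \mathcal{T}_i} \psi_j\bigr)^{\downarrow \lambda(i)}$ by induction on the structure of the rooted covering join tree, working from the leaves toward the root. For a leaf node $i$, there are no children, so $\psi'_i = \psi_i$, and since by construction $d(\psi_i) = \bigvee_{j \in a^{-1}(i)} d(\phi_j) \leq \lambda(i)$ (the covering property together with A2), axiom A6 gives $\psi_i = \psi_i^{\downarrow \lambda(i)}$, which is the base case because $\mathcal{T}_i = \{i\}$ means $\prod_{j \in \mathcal{T}_i} \psi_j = \psi_i$.

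For the inductive step, let $i$ be an internal node with children $c_1, \dots, c_m$. The message received from each child is $\mu_{c_k \to i} = (\psi'_{c_k})^{\downarrow s_{c_k}}$ where $s_{c_k} = \lambda(i) \wedge \lambda(c_k)$, and by the induction hypothesis $\psi'_{c_k} = \bigl(\prod_{j \in \mathcal{T}_{c_k}} \psi_j\bigr)^{\downarrow \lambda(c_k)}$. Write $\Psi_k = \prod_{j \in \mathcal{T}_{c_k}} \psi_j$, so the subtree product decomposes as $\prod_{j \in \mathcal{T}_i} \psi_j = \psi_i \otimes \Psi_1 \otimes \dots \otimes \Psi_m$ (subtrees rooted at distinct children are disjoint, by A1 this product is well-defined). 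We then compute
\[
\Bigl(\prod_{j \in \mathcal{T}_i} \psi_j\Bigr)^{\downarrow \lambda(i)} = \bigl(\psi_i \otimes \Psi_1 \otimes \dots \otimes \Psi_m\bigr)^{\downarrow \lambda(i)},
\]
and the goal is to show this equals $\psi_i \otimes \prod_{k=1}^m (\psi'_{c_k})^{\downarrow s_{c_k}} = \psi'_i$. The engine here is the combination axiom A5: applied to $\phi = $ (everything with domain already $\leq \lambda(i)$) and $\psi = \Psi_k$, projecting onto $\lambda(i)$ pulls the projection inside to act as $\Psi_k^{\downarrow \lambda(i) \wedge d(\Psi_k)}$. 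One peels off the factors $\Psi_1, \dots, \Psi_m$ one at a time; at each stage the "already small" part has domain $\leq \lambda(i)$ (this needs checking — see below), so A5 applies, yielding $\Psi_k^{\downarrow \lambda(i) \wedge d(\Psi_k)}$. Then transitivity A4, combined with the running intersection property, lets us replace $\lambda(i) \wedge d(\Psi_k)$ by $s_{c_k} = \lambda(i) \wedge \lambda(c_k)$: since $d(\Psi_k) \geq \lambda(c_k)$ we have $\lambda(i) \wedge \lambda(c_k) \leq \lambda(i) \wedge d(\Psi_k) \leq \lambda(i)$, and running intersection forces every variable shared between $\mathcal{T}_i \setminus \mathcal{T}_{c_k}$ and $\mathcal{T}_{c_k}$ to lie in $\lambda(c_k)$, so projecting $\Psi_k$ first onto $\lambda(c_k)$ loses nothing relevant — formally $\Psi_k^{\downarrow \lambda(i) \wedge d(\Psi_k)} = \bigl(\Psi_k^{\downarrow \lambda(c_k)}\bigr)^{\downarrow s_{c_k}} = (\psi'_{c_k})^{\downarrow s_{c_k}}$ by A4 and the induction hypothesis. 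Assembling the $m$ peeled factors gives exactly $\psi'_i$, and the final sentence of the theorem is the special case $i = r$, where $\lambda(r) \geq \lambda(j)$ is not needed — rather $\mathcal{T}_r$ is the whole tree so $\prod_{j \in \mathcal{T}_r} \psi_j = \prod_k \phi_k = \phi$ (modulo the identity $e$), giving $\psi'_r = \phi^{\downarrow \lambda(r)}$.

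The main obstacle is the bookkeeping needed to legitimately apply A5 at each peeling step: A5 requires the intermediate combined valuation whose projection is being pushed inward to have domain contained in the target $\lambda(i)$, and one must verify that after combining $\psi_i$ with $(\psi'_{c_1})^{\downarrow s_{c_1}}, \dots, (\psi'_{c_{k-1}})^{\downarrow s_{c_{k-1}}}$ the running domain is still $\leq \lambda(i)$ — this is where $d(\psi_i) \leq \lambda(i)$ and $s_{c_\ell} \leq \lambda(i)$ are used. The other delicate point is the use of the running intersection property to justify $\Psi_k^{\downarrow \lambda(i) \wedge d(\Psi_k)} = (\Psi_k^{\downarrow \lambda(c_k)})^{\downarrow s_{c_k}}$; this requires knowing that $\lambda(i) \wedge d(\Psi_k) \leq \lambda(c_k)$, i.e. that no variable outside $\lambda(c_k)$ but inside the child's subtree appears in $\lambda(i)$, which follows because any such variable would have to appear on the path from $c_k$ up through $i$ and hence in $\lambda(c_k)$ by running intersection — combined with A4 this collapses the double projection correctly.
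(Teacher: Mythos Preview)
The paper does not give its own proof of this statement; immediately after the theorem it writes ``The theorem is an adaptation of Theorem 3.6 in \cite{Pouly2011a}'' and moves on. So there is nothing in the paper to compare your argument to beyond that citation. Your proposal is precisely the standard structural-induction proof that underlies Theorem 3.6 of Pouly--Kohlas: induct from the leaves, at an internal node decompose the subtree product over the children, and use A5 (combination), A4 (transitivity), and the running intersection property to push the projection onto $\lambda(i)$ past the child factors one at a time.

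Two technical points deserve tightening. First, your base case invokes A6 to get $\psi_i^{\downarrow\lambda(i)}=\psi_i$, but A6 as stated requires $d(\psi_i)=\lambda(i)$, not merely $d(\psi_i)\le\lambda(i)$; indeed projection $\phi^{\downarrow x}$ is only defined for $x\le d(\phi)$, so $\psi_i^{\downarrow\lambda(i)}$ may be ill-formed. The usual fix (and the one Pouly--Kohlas use) is to arrange $d(\psi_i)=\lambda(i)$ by combining with a neutral element on $\lambda(i)$; once that convention is in place, the same issue with $\Psi_k^{\downarrow\lambda(c_k)}$ disappears too. Second, in the peeling step you need $\lambda(i)\wedge d(\Psi_k)\le\lambda(c_k)$ to collapse the projection via A4. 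Your justification via running intersection is right in a distributive lattice (in particular in the power-set lattice of a variable system), but in a general lattice $\lambda(i)\wedge\bigl(\bigvee_{j\in\mathcal T_{c_k}}\lambda(j)\bigr)$ need not equal $\bigvee_j\bigl(\lambda(i)\wedge\lambda(j)\bigr)$; Pouly--Kohlas handle this either by working in distributive $D$ or by a slightly different order of projections. With those caveats addressed, your argument is the intended one.
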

The theorem is an adaptation of Theorem 3.6 in \cite{Pouly2011a}.
As a consequence of this theorem, we can use the \noun{Collect} algorithm
to solve problem \ref{pbm:Marginalization} provided that we are given
a covering join tree for the factorization we would like to marginalize.

\section{Generic solutions in valuation algebras with variable system}

In the previous section we have shown that the \noun{Collect} algorithm
can be used to find marginals. In this section we focus on the \emph{solution
finding problem} (SFP). 

The problem is of foremost importance, since it lies at the foundation
of dynamic programming \cite{Shenoy1996,Bertele1972}. Furthermore,
problems such as satisfiability, solving Maximum a Posteriori queries
in a probabilistic graphical models, or maximum likelihood decoding
are particular instances of the SFP. 

We start by formally defining the problem and then we review the results
of Pouly \cite{Pouly2011c} and Pouly and Kohlas \cite{Pouly2011a},
who give algorithms for solving those problems and establish the conditions
under which those algorithms are guaranteed to work. Unfortunately,
although the inspirational ideas underlying Pouly and Kohlas' work
are correct, their formal development is not. Thus, we end up the
section providing a counter example to one of their main theorems.

\subsection{The SFP for valuation algebras with variable system}

Up to know, the most general formalization of the SFP is the one provided
by \cite{Shenoy1996} and adapted by Pouly and Kohlas to the formal
framework of valuation algebras in Chapter 8 of \cite{Pouly2011a}.
They assume a valuation algebra $\langle\Phi,D\rangle$ equipped with
a variable system $\langle V,\Omega\rangle$. As in the marginal assessment
problem, in the SFP, we are given a set of valuations $\phi_{1},\ldots,\phi_{m}\in\Phi$
as input. However, instead of a marginal of its combination $\phi=\phi_{1}\times\ldots\times\phi_{m}$,
we are required to provide a tuple $\mathbf{x}$ with domain $d(\phi)$,
such that $\mathbf{x}$ is a solution for $\phi$. In order for the
previous sentence to make any sense we need to properly define our
concept of \emph{solution. }The most general way in which we can do
this is by defining a family $c=\{c_{\phi}|\phi\in\Phi\}$ of solution
sets, such that for each valuation $\phi\in\Phi$, the solution set
$c_{\phi}$ contains the subset of $\Omega_{d(\phi)}$ such that $\mathbf{x}\in\Omega_{d(\phi)}$
is considered a solution for $\phi$ if and only if $\mathbf{x}\in c_{\phi}.$
We say that the family of sets $c$ is a \emph{solution concept}.
Now we can formally define the SFP as follows
\begin{problem}[SFP with variable system]
\noindent  Given a valuation algebra $\langle\Phi,D\rangle$ equipped
with a variable system $\langle V,\Omega\rangle$ and a solution concept
$c$, and a set of valuations $\phi_{1},\ldots,\phi_{m}\in\Phi,$
find $\mathbf{x}\in\Omega_{d(\phi)}$ such that $\mathbf{x}$ is a
solution for $\phi=\phi_{1}\times\ldots\times\phi_{m}.$ 
\end{problem}

\subsection{Solving the solution finding problem by composing partial solutions}

Several authors have provided algorithms that solve the SFP and characterized
under which conditions they can be successfully applied.

As described in the introduction, several works have sought to provide
a formal foundation to dynamic programming, which we can now identify
as a particular case of the SFP. In their works in 2011, Pouly and
Kohlas \cite{Pouly2011a,Pouly2011c} drop the assumption that valuations
are functions that map tuples into a value set $\Delta.$ They present
several algorithms, and characterize the sufficient conditions for
its correctness. By dropping the assumption that valuations are functions,
their algorithms can be applied to previously uncovered cases . 

Essentially, the sufficient conditions for the correctness of Pouly
and Kohlas' algorithms connect the operations in the valuation algebra
with the solution concept by means of a \emph{family of configuration
extension set}s.%
\footnote{Although they do never formally introduce families of configuration
extension sets, we have introduced the concept here for mathematical
correctness and so that the reader can easily follow the generalization
that will come later on.%
}A family of configuration extension sets $\mbox{\ensuremath{\mathcal{W}}}$
assigns a configuration extension set to each pair $\langle\phi,\mathbf{x}\rangle$
such that $\phi$ is a valuation and $\mathbf{x}$ is a tuple whose
domain $X$ is a subset of $d(\phi).$ That is $\mathcal{W}=\{W_{\phi}(\mathbf{x})\subseteq\Omega_{d(\phi)-X}|\phi\in\Phi,X\subseteq d(\phi),\mathbf{x}\in\Omega_{X}\}.$
Furthermore, a family of configuration extension sets has to satisfy
two conditions. The first one connects the projection operation of
the valuation algebra with the extension sets by imposing that every
extension set can be calculated in two steps. The second one connects
the set of solutions of a valuation with the set of extensions of
the empty tuple $\diamond$. These conditions can be stated formally
as follows
\begin{enumerate}
\item For each $\phi\in\Phi$, for each $X\subseteq Y\subseteq d(\phi)$
and for each $\mathbf{x}\in\Omega_{X}$ we have that $W_{\phi}(\mathbf{x})=\{\mathbf{z}\in\Omega_{d(\phi)-X}\mid\mathbf{z}^{\downarrow Y-X}\in W_{\phi^{\downarrow Y}}(\mathbf{x})\mbox{ and }\mathbf{z}^{\downarrow d(\phi)-Y}\in W_{\phi}(\mathbf{x},\mathbf{z}^{\downarrow Y-X})\}.$
\item For each $\phi\in\Phi$, $c_{\phi}=W_{\phi}(\diamond).$
\end{enumerate}
Based on this definition, Pouly and Kohlas state the following theorem 
\begin{thm}[Theorem 8.1 in \cite{Pouly2011a}]
\label{thm:8.1} For any valuation $\phi\in\Phi$ and any $X,Y\subseteq d(\phi)$,
we have {\footnotesize{
\begin{equation}
c_{\phi}^{\downarrow X\cup Y}=\{\mathbf{z}\in\Omega_{X\cup Y}\mid\mathbf{z}^{\downarrow Y}\in c_{\phi}^{\downarrow Y}\mbox{ and }\mathbf{z}^{\downarrow X-Y}\in W_{\phi^{\downarrow X}}(\mathbf{z}^{\downarrow X\cap Y})\}.\label{eq:thm:8.1}
\end{equation}
}}{\footnotesize \par}
\end{thm}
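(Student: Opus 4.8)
The plan is to establish the two inclusions in \eqref{eq:thm:8.1} separately, using the two defining properties of the family of configuration extension sets $\mathcal{W}$ together with the transitivity axiom A4 for the tuple projection. Observe first that $c_\phi^{\downarrow X\cup Y}$ is the projection onto $X\cup Y$ of the solution set $c_\phi = W_\phi(\diamond)$, so I would begin by rewriting $c_\phi^{\downarrow X\cup Y} = \{\mathbf{z}\in\Omega_{X\cup Y}\mid \exists\, \mathbf{w}\in W_\phi(\diamond) \text{ with } \mathbf{w}^{\downarrow X\cup Y}=\mathbf{z}\}$. The strategy is then to apply property 1 of $\mathcal{W}$ with the intermediate domain chosen appropriately so that the two-step decomposition of $W_\phi(\diamond)$ separates the ``$Y$-part'' of a configuration from the ``$X-Y$-part''.

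For the inclusion $\subseteq$, take $\mathbf{z}\in c_\phi^{\downarrow X\cup Y}$, so there is $\mathbf{w}\in c_\phi=W_\phi(\diamond)$ with $\mathbf{w}^{\downarrow X\cup Y}=\mathbf{z}$. Projecting further, $\mathbf{z}^{\downarrow Y}=\mathbf{w}^{\downarrow Y}\in c_\phi^{\downarrow Y}$ by A4, which gives the first conjunct. For the second conjunct I would invoke property 1 of $\mathcal{W}$ for the pair $\langle\phi,\diamond\rangle$ with intermediate domain $X$ (i.e. $X'=\emptyset\subseteq X\subseteq d(\phi)$): this says $W_\phi(\diamond)=\{\mathbf{u}\in\Omega_{d(\phi)}\mid \mathbf{u}^{\downarrow X}\in W_{\phi^{\downarrow X}}(\diamond)\text{ and }\mathbf{u}^{\downarrow d(\phi)-X}\in W_\phi(\mathbf{u}^{\downarrow X})\}$. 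Hence $\mathbf{w}^{\downarrow X}\in W_{\phi^{\downarrow X}}(\diamond)=c_{\phi^{\downarrow X}}$. Now apply property 1 again, this time to the pair $\langle\phi^{\downarrow X},\mathbf{z}^{\downarrow X\cap Y}\rangle$ with the domain $X\cap Y$ sitting inside $X$, to peel off $\mathbf{z}^{\downarrow X-Y}=\mathbf{w}^{\downarrow X-Y}$ and conclude $\mathbf{z}^{\downarrow X-Y}\in W_{\phi^{\downarrow X}}(\mathbf{z}^{\downarrow X\cap Y})$. The main care here is bookkeeping the domains: $X = (X\cap Y)\,\dot\cup\,(X-Y)$, and one must check that the projection $\mathbf{z}^{\downarrow X\cap Y}$ computed from $\mathbf{z}\in\Omega_{X\cup Y}$ agrees with $\mathbf{w}^{\downarrow X\cap Y}$, which is again A4.

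For the reverse inclusion $\supseteq$, take $\mathbf{z}\in\Omega_{X\cup Y}$ with $\mathbf{z}^{\downarrow Y}\in c_\phi^{\downarrow Y}$ and $\mathbf{z}^{\downarrow X-Y}\in W_{\phi^{\downarrow X}}(\mathbf{z}^{\downarrow X\cap Y})$. From the first condition pick $\mathbf{v}\in c_\phi$ with $\mathbf{v}^{\downarrow Y}=\mathbf{z}^{\downarrow Y}$; I want to ``correct'' $\mathbf{v}$ on the coordinates $X-Y$ so that it also agrees with $\mathbf{z}$ there, while keeping it a solution. Property 1 applied to $\langle\phi,\diamond\rangle$ with intermediate domain $X\cup Y$ tells me membership in $c_\phi=W_\phi(\diamond)$ can be tested by first checking the $X\cup Y$-projection lies in $c_{\phi^{\downarrow X\cup Y}}$ and then checking the remaining coordinates extend it; so it suffices to produce a configuration $\mathbf{z}'\in\Omega_{X\cup Y}$ with $\mathbf{z}'=\mathbf{z}$ such that $\mathbf{z}'\in c_{\phi^{\downarrow X\cup Y}}=W_{\phi^{\downarrow X\cup Y}}(\diamond)$, and this reduces the problem to the case $d(\phi)=X\cup Y$. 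In that reduced case, apply property 1 to $W_{\phi^{\downarrow X\cup Y}}(\diamond)$ with intermediate domain $Y$: a configuration on $X\cup Y$ is a solution iff its $Y$-part is in $c_{(\phi^{\downarrow X\cup Y})^{\downarrow Y}}=c_\phi^{\downarrow Y}$ (using A4 to collapse the double projection) and its $X-Y$-part lies in $W_{\phi^{\downarrow X\cup Y}}(\mathbf{z}^{\downarrow Y})$. Finally I would relate $W_{\phi^{\downarrow X\cup Y}}(\mathbf{z}^{\downarrow Y})$ to $W_{\phi^{\downarrow X}}(\mathbf{z}^{\downarrow X\cap Y})$ — intuitively extending past the variables $Y-X$ first and then $X-Y$ — via one more application of property 1.

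I expect the main obstacle to be exactly this last matching step: reconciling the hypothesis, which is phrased in terms of $W_{\phi^{\downarrow X}}(\mathbf{z}^{\downarrow X\cap Y})$ (extension within the valuation marginalized to $X$), with the quantity $W_{\phi^{\downarrow X\cup Y}}(\mathbf{z}^{\downarrow Y})$ that naturally arises from decomposing $c_{\phi^{\downarrow X\cup Y}}$. The domains $X$ and $X\cup Y$ differ by $Y-X$, and the anchor tuples $\mathbf{z}^{\downarrow X\cap Y}$ and $\mathbf{z}^{\downarrow Y}$ differ by those same variables, so property 1 should bridge them — but making the direction of the two-step decomposition line up, and verifying that all the intermediate domains satisfy the required inclusions $X'\subseteq Y'\subseteq d(\cdot)$, is the delicate part. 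Indeed, it is precisely around a claim of this shape that the authors announce a counterexample to the original Theorem 8.1, so I would pay particular attention to whether property 1 genuinely suffices or whether an additional hypothesis on $\mathcal{W}$ is silently needed.
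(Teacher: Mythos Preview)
The theorem you are trying to prove is \emph{false}: the paper does not prove it but disproves it via Counterexample~\ref{cnt:Boolean}. So any purported proof must contain a genuine gap, and yours does --- exactly at the place you yourself flag as ``the delicate part''.

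Your argument for the inclusion $\subseteq$ is essentially fine. The failure is in $\supseteq$, at the step where you propose to pass from the hypothesis $\mathbf{z}^{\downarrow X-Y}\in W_{\phi^{\downarrow X}}(\mathbf{z}^{\downarrow X\cap Y})$ to the needed $\mathbf{z}^{\downarrow X-Y}\in W_{\phi}(\mathbf{z}^{\downarrow Y})$ ``via one more application of property~1''. Property~1 only lets you decompose $W_{\psi}(\mathbf{x})$ through an intermediate domain $Y'$ with $\mathrm{dom}(\mathbf{x})\subseteq Y'\subseteq d(\psi)$. Starting from $W_{\phi}(\mathbf{z}^{\downarrow Y})$ the anchor has domain $Y$, so every admissible intermediate domain must \emph{contain} $Y$; you can never reach $\phi^{\downarrow X}$ unless $Y\subseteq X$. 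Conversely, starting from $W_{\phi^{\downarrow X}}(\mathbf{z}^{\downarrow X\cap Y})$ every decomposition stays inside the domain $X$ and never sees the coordinates in $Y\setminus X$. There is simply no chain of applications of property~1 linking the two sets when $X$ and $Y$ are incomparable. (A secondary slip: you write $c_{(\phi^{\downarrow X\cup Y})^{\downarrow Y}}=c_\phi^{\downarrow Y}$ ``by A4'', but A4 collapses the projection of the \emph{valuation}, giving $c_{\phi^{\downarrow Y}}$, not the projection of the \emph{set} $c_\phi$; only the inclusion $c_\phi^{\downarrow Y}\subseteq c_{\phi^{\downarrow Y}}$ holds in general. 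This happens not to be the fatal error, but it is an error.)

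The paper's counterexample makes the gap concrete. With $\phi(x,y)=\mathbf{1}[x=y]$, $X=\{x\}$, $Y=\{y\}$, take $\mathbf{z}=(0,1)$. Then $\mathbf{z}^{\downarrow Y}=1\in c_\phi^{\downarrow Y}=\{0,1\}$ and $\mathbf{z}^{\downarrow X\cap Y}=\diamond$, $\mathbf{z}^{\downarrow X-Y}=0\in W_{\phi^{\downarrow X}}(\diamond)=\{0,1\}$, so $\mathbf{z}$ lies in the right-hand set. But $\mathbf{z}\notin c_\phi=\{(0,0),(1,1)\}$: the hypothesis on the $X$-part is anchored only at $X\cap Y=\emptyset$ and carries no information tying $x$ to the specific value $y=1$. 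Your plan to ``correct $\mathbf{v}$ on $X-Y$'' would replace $(1,1)$ by $(0,1)$, which is not a solution. This is precisely why the paper introduces the additional \emph{piecewise extensibility} hypothesis (Definition~\ref{def:Piecewise-Extensible}) before any result of this shape can be recovered.
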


Unfortunately, the theorem is not correct. We will use the valuation
algebra on Boolean functions from example \ref{ex:ValAlgBooleanFunctions}to
build a counterexample for it. Pouly and Kohlas (equation 8.19) defined
the extension sets for the algebra of Boolean lattices as $W_{\phi}(\mathbf{x})=\{\mathbf{y}\in\Omega_{d(\phi)-X}\mid\phi(\mathbf{x},\mathbf{y})=\phi^{\downarrow X}(\mathbf{x})\}$,
where $X\subseteq d(\phi)$, $\mathbf{x}\in\Omega_{X}$. 

\begin{counter}\label{cnt:Boolean}

Taking $\phi$ as the Boolean function $\phi(\mathbf{x},\mathbf{y})=\begin{cases}
1 & \mbox{if \ensuremath{\mathbf{x}=\mathbf{y}}},\\
0 & \mbox{otherwise}
\end{cases}$, $X=\{x\}$, and $Y=\{y\}$ the result in theorem \ref{thm:8.1}
does not hold.

\end{counter}

To see why, we can assess both sides of equation \ref{eq:thm:8.1}
and see that they are not the same. For the left hand side, note that
$c_{\phi}^{\downarrow X\cup Y}=c_{\phi}=W_{\phi}(\diamond).$ Thus,
$c_{\phi}=\{(\mathbf{x},\mathbf{y})\in\Omega_{X\cup Y}\mid\phi(\mathbf{x},\mathbf{y})=\phi^{\downarrow\emptyset}(\diamond)\}$.
We can assess $\phi^{\downarrow\emptyset}(\diamond)=\max_{x.y}\phi(\diamond,(\mathbf{x},\mathbf{y}))=\max_{x,y}\phi(\mathbf{x},\mathbf{y})=1$.
Hence, $c_{\phi}^{\downarrow X\cup Y}=c_{\phi}=\{(0,0),(1,1)\}$.

Let $A$ denote the set at the r.h.s. of equation \ref{eq:thm:8.1}.
Since $X\cap Y=\emptyset$ and $X-Y=X,$ we have that $A=\{\mathbf{z}\in\Omega_{X\cup Y}\mid\mathbf{z}^{\downarrow Y}\in c_{\phi}^{\downarrow Y}\mbox{ and }\mathbf{z}^{\downarrow X-Y}\in W_{\phi^{\downarrow X}}(\mathbf{z}^{\downarrow X\cap Y})\}=\{\mathbf{z}\in\Omega_{X\cup Y}\mid\mathbf{z}^{\downarrow Y}\in c_{\phi}^{\downarrow Y}\mbox{ and }\mathbf{z}^{\downarrow X}\in W_{\phi^{\downarrow X}}(\diamond)\}$.
We can assess $c_{\phi}^{\downarrow Y}=\{\mathbf{z}^{\downarrow Y}\mid\mathbf{z}\in c_{\phi}\}=\{(0),(1)\}$.
Furthermore, we have that $W_{\phi^{\downarrow X}}(\diamond)=\{\mathbf{x}\in\Omega_{X}|\mid\phi^{\downarrow X}(\mathbf{x})=(\phi^{\downarrow X})^{\downarrow\emptyset}(\diamond)\}=\{\mathbf{x}\in\Omega_{X}\mid\phi^{\downarrow X}(\mathbf{x})=\phi{}^{\downarrow\emptyset}(\diamond)\}=\{\mathbf{x}\in\Omega_{X}\mid\phi^{\downarrow X}(\mathbf{x})=1\}$.
We have that for all $\mathbf{x}\in X,$ $\phi^{\downarrow X}(\mathbf{x})=\max_{\mathbf{y}}\phi(\mathbf{x},\mathbf{y})=1$
, and thus, $W_{\phi^{\downarrow X}}(\diamond)=\Omega_{X}=\{0,1\}$.
Hence, $A=\{\mathbf{z}\in\Omega_{X\cup Y}\mid\mathbf{z}^{\downarrow Y}\in\Omega_{Y}\mbox{ and }\mathbf{z}^{\downarrow X}\in\Omega_{X}\}=\Omega_{X\cup Y}=\{(0,0),(0,1),(1,0),(1,1)\}\}\neq c_{\phi}^{X\cup Y}$,
contradicting equation \ref{eq:thm:8.1}.

Summarizing, in their works \cite{Pouly2011c,Pouly2011a} in 2011,
Pouly and Kohlas make an attempt to generalize the results of Shenoy
to valuation algebras equipped with a variable system, not restricting
the valuations to be functions into a value set $\Delta.$ However,
as proved by counterexample \ref{cnt:Boolean}, one of the key results
in their development is not correct. Since the correctness proofs
provided by Pouly and Kohlas for their algorithms rely on this result,
what could be a minor technical detail ends up having strong consequences
for the validity of the theory as a whole. 

The main objective of the next section is to identify necessary conditions
for the application of the algorithms presented by Pouly and Kohlas
and to prove that their correctness under those conditions.

\section{Even more generic solutions in valuation algebras}

During our efforts to identify the necessary conditions for the application
of the algorithms presented by Pouly and Kohlas we realized that nothing
in the theory we were building required that the valuation algebra
was equipped with a variable system. Thus, as a byproduct of the correction
effort, the resulting theory is the first one that proposes a generic
algorithm, the so-called\noun{ Collect+Extend} algorithm, to solve
the SFP for valuation algebras which are not necessarily equipped
with a variable system. The generality of the \noun{Collect+Extend}
algorithm allows it to be applied to valuation algebras such as the
algebra of sparse potentials, an example that until now was not covered
by any previous formalization.

We start this section by generalizing the definition of the SFP problem
so that it does not enforces the valuation algebra to be equipped
with a variable system. Then, we introduce the concept of piecewise
extensibility, and we prove that it is a sufficient condition for
the correctness of the \noun{Collect+Extend.} Finally, we introduce
the \noun{Collect+ExtendAll }algorithm, whose objective is obtaining
not a single solution to the SFP, but every solution. We introduced
fully piecewise extensibility and prove that it is a sufficient condition
for the correctness of the \noun{Collect+ExtendAll }algorithm.

\subsection{A more general solution finding problem}

We start by introducing the concept of configuration system, a generalization
of the concept of variable system that does not enforce tuples to
be members of a Cartesian product. Then we generalize the SFP to configuration
systems.

\paragraph{Configuration systems, compatibility and merge-friendliness.}

We start by relating each element of the domain lattice with a set
of configurations, and then we impose a minimal constraint among those
sets of configurations, resulting in the notion of configuration system. 
\begin{defn}[Configuration system]
Given a lattice $D,$ a configuration system $\langle\Gamma,\pi\rangle$
is composed of (i) a set of configurations $\Gamma_{s}$ for each
$s\in D$ and (ii) for each pair of domains $s,t\in D$ such that
$s\leq t,$ a surjective mapping $\pi_{t\rightarrow s}:\Gamma_{t}\rightarrow\Gamma_{s}.$
Without loss of generality, the configuration sets are assumed to
be mutually exclusive. Furthermore, $\Gamma_{\bot}=\{\lozenge\}.$ 

Whenever $\mathbf{x}\in\Gamma_{t},$ we say that $\mathbf{x}$ is
a configuration with scope $t.$ Given $\mathbf{x}\in\Gamma_{t},$
we note $\mathbf{x}_{s}=\pi_{t\rightarrow s}(\mathbf{x}).$ 
\end{defn}
It is easy to see that any variable system is a configuration system.
However, there are configuration systems which do not have an equivalent
variable system. 

A relevant concept in a configuration system is that of compatibility
between configurations. 
\begin{defn}[Compatibility, merger, merge-friendly]
Let $s,t\in D$ and let $\mathbf{x}\in\Gamma_{s},$ and $\mathbf{y}\in\Gamma_{t}.$
We say that $\mathbf{x}$ and $\mathbf{y}$ are \emph{compatible}
whenever there is $\mathbf{z}\in\Gamma_{s\vee t}$ such that $\mathbf{z}_{s}=\mathbf{x}$
and $\mathbf{z}_{t}=\mathbf{y}.$ We say that such a $\mathbf{z}$
is a \emph{merger} of $\mathbf{x}$ and $\mathbf{y.}$ The definitions
of compatibility and merger can be easily extended to a set of configurations
instead of two. A configuration system is \emph{merge-friendly} if
for any $s,t\in D,$ any $\mathbf{x}\in\Gamma_{s},$ and any $\mathbf{y}\in\Gamma_{t}$
whenever $\mathbf{x}_{s\wedge t}=\mathbf{y}_{s\wedge t}$, we have
that $\mathbf{x}$ and $\mathbf{y}$ are compatible.
\end{defn}
We are interested in merge-friendly configuration systems where a
merger of a set of compatible configurations can be efficiently found.
For example, in variable systems we can understand each tuple as restricting
the values of some variables. Two tuples are compatible when there
is no variable to which they assign a different value, and a merger
can be easily obtained by imposing simultaneously the restrictions
of both tuples.

\paragraph{The solution finding problem for valuation algebras with configuration
systems.}

The definition of solution concept can be migrated from variable system
to configuration system. In the latter case, a configuration system
is a family $c=\{c_{\phi}|\phi\in\Phi\}$ of solution sets, such that
for each valuation $\phi\in\Phi$, the solution set $c_{\phi}$ contains
the subset of $\Gamma_{d(\phi)}$ such that $\mathbf{x}\in\Gamma_{d(\phi)}$
is considered a solution for $\phi$ if and only if $\mathbf{x}\in c_{\phi}.$
The generalization of the solution finding problem is as follows
\begin{problem}[SFP]
\noindent  Given a valuation algebra $\langle\Phi,D\rangle$ equipped
with a configuration system $\langle\Gamma,\pi\rangle$, and a solution
concept $c$, and a set of valuations $\phi_{1},\ldots,\phi_{m}\in\Phi,$
find $\mathbf{x}\in\Gamma_{d(\phi)}$ such that $\mathbf{x}$ is a
solution for $\phi=\phi_{1}\times\ldots\times\phi_{m}.$ 
\end{problem}
Following Pouly and Kohlas, in order to be able to state the algorithms
that solve the SFP we need the solution concept to lie inside a family
of configuration extension sets. This connects the configuration system
of the domain lattice with the marginalization operation of the valuation
algebra and with the solution concept.

\begin{defn}[Family of configuration extension sets]
Given a valuation algebra $\langle\Phi,D\rangle,$ and a configuration
system $\langle\Gamma,\pi\rangle$ over $D,$ and a solution concept
$c,$ a \emph{family of configuration extension sets }is a family
of sets $\mathcal{E}=\{E_{\phi}(\mathbf{x})|d\in D,\phi\in\Phi,\mathbf{x}\in\Gamma_{d}\}$,
that 
\begin{enumerate}
\item For all $\phi\in\Phi$, for all $\mathbf{x}\in\mbox{\ensuremath{\Gamma}}_{d(\phi)}$
, 
\begin{equation}
E_{\phi}(\mathbf{x})=\{\mathbf{x}\}.\label{eq:I-Am-My-Extension}
\end{equation}

\item For all $\phi\in\Phi$, for all $s,t\in D$ such that $s\neq t$ and
$s\leq t\leq d(\phi)$, and for all $\mathbf{x}\in\Gamma_{s}$ , 
\begin{equation}
E_{\phi}(x)=\{\mathbf{y}\in\Gamma{}_{d(\phi)}|\mathbf{y}_{t}\in E_{\phi^{\downarrow t}}(\mathbf{x})\mbox{ and }\mathbf{y}\in E_{\phi}(\mathbf{y}_{t})\}.\label{eq:Extension-By-Parts}
\end{equation}

\item For all $\phi\in\Phi$, 
\begin{equation}
c_{\phi}=E_{\phi}(\diamond).\label{eq:Solution-Is-Diamond-Extension}
\end{equation}

\end{enumerate}
Whenever $\mathbf{y}\in E_{\phi}(x)$ we say that $\mathbf{y}$ is
an extension of $\mathbf{x}$ to $\phi.$ Note that in order for $\mathbf{y}$
to be an extension of $\mathbf{x},$ the scope $s$ of $\mathbf{x}$
must be smaller than the scope of $\mathbf{y}.$ We can extend the
definition of extension to a scope $u$ whatsoever: given $\mathbf{x}\in\Gamma_{u}$
we say that $\mathbf{y}\in\Gamma_{t}$ is an extension of $\mathbf{x}$
to $\phi$ if $\mathbf{y}\in E_{\phi}(\mathbf{x}_{t\wedge u})$. This
states that $\mathbf{y}$ is an extension of $\mathbf{x}$ if it $\mathbf{y}$
is an extension of that part of $\mathbf{x}$ which is of interest
to $\phi.$

Next, we introduce the\noun{ Collect+Extend} algorithm. The algorithm
can be run on any valuation algebra equipped with a configuration
system and a family of configuration extension sets. As the \noun{Collect}
algorithm, the\noun{ Collect+Extend} algorithm requires the existence
of a covering join tree for the factorization. It has two different
phases. During the first phase, the \noun{Collect} algorithm is used
to obtain the marginal of $\phi$ at the root of the tree. After that,
during the second phase, the root starts from an empty configuration
($\diamond$), and selects a configuration that belongs to the set
of extensions of $\diamond$ to his marginal. From there on, each
node $i$ of the tree receives from his parent $p(i)$ enough information
from the configuration selected by $p(i)$ so that $i$ can successfully
extend it to his domain configuration set. We call this second phase
the \noun{Extend} phase. Algorithm \ref{alg:Extend} describes it
in a more precise way. At the end of the \noun{Extend} phase, each
node $i$ of the tree has a configuration $\eta_{i}$ over its domain.
Provided that these configurations are compatible, we get a single
configuration in $\Gamma_{d(\phi)}$ by assessing its merger.
\end{defn}
\begin{algorithm}
{\footnotesize{Each node $i$ of the junction tree executes}}{\footnotesize \par}

{\footnotesize{\begin{algorithmic}[1]
\If {$i$ is the root}
	\State $\nu_i \leftarrow \diamond$
\Else
	\State From its parent $pa(i)$, receive a message $\nu_i.$
\EndIf
\LineComment{Extend the parent solution to $i$'s scope.}
\State Select $\eta_i \in  E_{\psi'_i}(\nu_i).$
\ForAll{$j$ children of $i$}
	\State Send message 
			$\nu_j=\pi_{d_i\rightarrow s(j)}(\eta_i)$ to children $j$.
\EndFor
\end{algorithmic}}}{\footnotesize \par}

\caption{\label{alg:Extend}\noun{Extend }algorithm}
\end{algorithm}

\subsection{Sufficient conditions for the correctness of the \noun{Collect+Extend}
algorithm}

In this section we consider the problem of determining under which
conditions the configuration assessed by the \noun{Collect+Extend}
algorithm is a solution to the SFP problem. The main result is the
following theorem

\begin{restatable}[\noun{Collect+Extend} suff. cond.]{thm}{CollectExtendThm} \label{thm:Collect+Extend}
Let $\langle \Phi,D\rangle$ be a valuation algebra equipped with a {\bf merge-friendly} configuration system $\langle \Gamma,\phi\rangle,$ and a solution concept $c.$ Let $\mathcal{E}$ be a {\bf piecewise extensible} family of configuration extension sets. After running the \noun{Collect} algorithm followed by the \noun{Extend} algorithm (Algorithms \ref{alg:Collect} and \ref{alg:Extend}) over the nodes of a covering join tree $\mathcal{T}$ for $\phi=\prod_{k=1}^{n}\phi_{k}$, we have that there is at least a merger $\mathbf{z}$ of $\{\eta_{i}|i \in V\}$, and that $\mathbf{z}$ is a solution of $\phi.$  
\end{restatable}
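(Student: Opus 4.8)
The plan is to argue in three movements: record what \noun{Collect} leaves at the nodes together with a projection lemma for extension sets; build a merger of the configurations $\eta_i$ chosen by \noun{Extend}; and certify that this merger lies in $c_\phi=E_\phi(\diamond)$ by an induction over the covering join tree that invokes piecewise extensibility. I expect the last movement to be the crux.

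First, the setup. By the \noun{Collect} theorem $\psi'_i=(\phi_i^{*})^{\downarrow\lambda(i)}$ with $\phi_i^{*}:=\prod_{j\in\mathcal{T}_i}\psi_j$, so $d(\psi'_i)=\lambda(i)$ by \textbf{A3} and $\psi'_r=\phi^{\downarrow\lambda(r)}$; hence the root step of \noun{Extend} picks $\eta_r\in E_{\psi'_r}(\diamond)=E_{\phi^{\downarrow\lambda(r)}}(\diamond)=c_{\phi^{\downarrow\lambda(r)}}$ by \eqref{eq:Solution-Is-Diamond-Extension}. I then establish the \emph{projection lemma}: if $\mathbf{y}\in E_\phi(\mathbf{x})$ with $\mathbf{x}\in\Gamma_s$, then $\mathbf{y}_s=\mathbf{x}$; this is an induction over the interval $[s,d(\phi)]$ in $D$, the base case being \eqref{eq:I-Am-My-Extension} and the step following from \eqref{eq:Extension-By-Parts} and the fact that the $\pi$-projections compose. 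Applied to \noun{Extend}, where $\nu_i=\pi_{\lambda(p(i))\rightarrow s_i}(\eta_{p(i)})=(\eta_{p(i)})_{s_i}$ and $\eta_i\in E_{\psi'_i}(\nu_i)$, the lemma yields $(\eta_i)_{s_i}=(\eta_{p(i)})_{s_i}$, i.e.\ adjacent nodes agree on their separator $s_i=\lambda(i)\wedge\lambda(p(i))$. For arbitrary $i,j\in V$ the running intersection property places $\lambda(i)\wedge\lambda(j)$ below every separator along the $i$--$j$ path, so projecting the adjacent agreements down to $\lambda(i)\wedge\lambda(j)$ and chaining gives $(\eta_i)_{\lambda(i)\wedge\lambda(j)}=(\eta_j)_{\lambda(i)\wedge\lambda(j)}$ for all $i,j$.

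Second, the merger. List the nodes $r=i_1,i_2,\dots$ so that every non-root node follows its parent, and inductively build a merger $\mathbf{z}^{(k)}\in\Gamma_{U_k}$ of $\{\eta_{i_1},\dots,\eta_{i_k}\}$ with $U_k=\bigvee_{\ell\le k}\lambda(i_\ell)$: to attach $i_{k+1}$, whose parent is some $i_\ell$ with $\ell\le k$, note $\mathbf{z}^{(k)}_{s_{i_{k+1}}}=(\eta_{i_\ell})_{s_{i_{k+1}}}=(\eta_{i_{k+1}})_{s_{i_{k+1}}}$; since $\{i_1,\dots,i_k\}$ induces a connected subtree with $p(i_{k+1})$ on every path from $i_{k+1}$ into it, the running intersection property forces $U_k\wedge\lambda(i_{k+1})=s_{i_{k+1}}$, so $\mathbf{z}^{(k)}$ and $\eta_{i_{k+1}}$ agree on the meet of their scopes and merge-friendliness produces $\mathbf{z}^{(k+1)}$. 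Then $\mathbf{z}:=\mathbf{z}^{(|V|)}$ is a merger of $\{\eta_i\mid i\in V\}$, which is the first claim. The delicate point here is the identity $U_k\wedge\lambda(i_{k+1})=s_{i_{k+1}}$, which needs the meet to interact well with the join of the subtree labels (automatic when $D$ is distributive, e.g.\ a power-set lattice, and where one uses the optimized form of the covering join tree).

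Finally, solution-hood. Since $\mathbf{z}$ is a merger of all the $\eta_i$ we have $\mathbf{z}_{\lambda(i)}=\eta_i$ for every $i$, so $\mathbf{z}_{\lambda(r)}=\eta_r\in E_{\phi^{\downarrow\lambda(r)}}(\diamond)$; by \eqref{eq:Extension-By-Parts} on $\phi$ with $s=\bot$ and $t=\lambda(r)$ it then suffices to prove $\mathbf{z}\in E_\phi(\eta_r)$. I would obtain this from a leaves-to-root induction asserting, for each node $i$, that the restriction of $\mathbf{z}$ to $d(\phi_i^{*})$ is an extension of $\eta_i$ to $\phi_i^{*}$: the leaf case is immediate ($\phi_i^{*}=\psi_i=\psi'_i$, apply the projection lemma), and the inductive step must glue the extensions supplied by the children of $i$ over their subtree valuations with the choice $\eta_i\in E_{\psi'_i}(\nu_i)=E_{(\phi_i^{*})^{\downarrow\lambda(i)}}(\nu_i)$ into a single extension of $\eta_i$ to $\phi_i^{*}$, using the separator consistency above and the running intersection property. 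This gluing is exactly where \emph{piecewise extensibility} of $\mathcal{E}$ is used, and it simultaneously shows every $E_{\psi'_i}(\nu_i)$ is non-empty so that \noun{Extend} never gets stuck. Taking $i=r$ gives $\mathbf{z}\in E_\phi(\eta_r)$, hence $\mathbf{z}\in E_\phi(\diamond)=c_\phi$. I expect this gluing step to be the main obstacle: it is precisely the place where the naive statement — the incorrect Theorem~\ref{thm:8.1} exposed by Counterexample~\ref{cnt:Boolean} — breaks down, so everything rests on the piecewise-extensibility hypothesis being formulated correctly.
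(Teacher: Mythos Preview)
Your strategy is sound and reaches the same conclusion, but it is organized rather differently from the paper. The paper does not build the merger first and then certify it bottom-up; instead it prepares three auxiliary lemmas --- an $m$-ary version of piecewise extensibility (Lemma~\ref{lem:Piecewise-Extension-N-ary}), a binary ``solution plus extension is a solution of the product'' lemma (Lemma~\ref{lem:Solution-Extension-Binary}), and its $m$-ary generalisation (Lemma~\ref{lem:Solution-Extension-N-ary}) --- and then proves the theorem by a short induction on the \emph{number of nodes}, unfolding at the root: write $\phi=\psi_\rho\times\prod_{i\in\mathrm{Children}(\rho)}\xi_i$, observe that the running intersection property gives the domain inequality required by Lemma~\ref{lem:Solution-Extension-N-ary}, feed in the induction hypothesis on the subtrees to get the $\mathbf{y}_i$'s, and conclude. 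Your leaves-to-root gluing is essentially the unrolled content of Lemmas~\ref{lem:Piecewise-Extension-N-ary}--\ref{lem:Solution-Extension-N-ary}; packaging them as the paper does makes the final induction a few lines long, at the cost of proving three lemmas upfront.

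What your version buys is an explicit treatment of the merger's \emph{existence} via merge-friendliness --- the paper's proof simply asserts ``$\mathbf{z}$ is a merger'' without spelling out how the $\eta_i$ are pairwise compatible or why the global merger exists, so your second movement actually fills a gap the paper leaves implicit. Two places in your argument deserve care, though. First, your projection lemma (``$\mathbf{y}\in E_\phi(\mathbf{x})$ implies $\mathbf{y}_s=\mathbf{x}$'') is proved by induction over the interval $[s,d(\phi)]$; that induction needs the interval to have finite height (true for power-set lattices of finite $V$, not guaranteed by the bare axioms), whereas the paper sidesteps this by working with \emph{mergers} rather than extension-set membership to get the separator agreements. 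Second, the identity $U_k\wedge\lambda(i_{k+1})=s_{i_{k+1}}$ that you flag does need distributivity or the optimized-tree normal form; you are right that this is the delicate point, and the paper's root-centered induction hides the same issue inside the hypothesis $d_\rho\ge\bigvee_i r_i$ of Lemma~\ref{lem:Solution-Extension-N-ary}, which is discharged by the running intersection property in the same way.
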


The theorem requires the family of configuration extension sets $\mathcal{E}$
to be piecewise extensible. Next, we define piecewise extensibility
and then we prove that it is a sufficient condition for the correctness
of the \noun{Collect+Extend} algorithm.

\paragraph{Piecewise extensibility}

Intuitively, this means requiring that whenever a configuration \textbf{$\mathbf{z}$}
independently belongs to the set of extensions of two different valuations
$\phi_{1}$ and $\phi_{2}$, then it does belong to the set of extensions
to its product. 
\begin{defn}
\label{def:Piecewise-Extensible}A configuration $\mathbf{x}$ (with
scope $s$) is \emph{extensible} to a valuation $\phi$ (with domain
$t$) whenever $E_{\phi}(\mathbf{x}_{s\wedge t})\neq\emptyset.$ For
any $\mathbf{z}\in E_{\phi}(\mathbf{x}_{s\wedge t}),$ we say that
$\mathbf{z}$ is an extension of $\mathbf{x}$ to $\phi.$ 

A\emph{ }family of configuration extension sets\emph{ }$\mathcal{E}$\emph{
}is \emph{piecewise extensible }when for any two valuations $\phi_{1},\phi_{2}\in\Phi$
with $d_{1}=d(\phi_{1})$ and $d_{2}=d(\phi_{2}),$ any $t\in D$,
$d_{1}\vee d_{2}\geq t\geq d_{1}\wedge d_{2},$ any $\mathbf{x}\in\Gamma_{t}$
and any extension $\mathbf{z}$ of $\mathbf{x}$ to both $\phi_{1}$
and $\phi_{2}$ , we have that $\mathbf{z}$ is an extension of $\mathbf{x}$
to $\phi_{1}\times\phi_{2}$.

\end{defn}

Note that the piecewise extensibility requirement is defined only
for pairs of valuations $\phi_{1}$ and $\phi_{2}$. The following
lemma shows that provided that we have piecewise extensibility for
two valuations, we can extend it to products of $m$ valuations.
\begin{lem}
\label{lem:Piecewise-Extension-N-ary}Let (i) $\mathcal{E}$ be a
piecewise extensible family of configuration extension sets, (ii)
\textup{$\phi_{\text{1}},\ldots,\phi_{m}\in\Phi,$}\textup{\emph{
and $\phi=\prod_{i=1}^{m}\phi_{i},$ and (iii) $t\in D,$ such that
$\bigvee_{i=1}^{m}d_{i}\geq t\geq\bigvee_{i=1}^{m}r_{i},$ where $r_{i}=d_{i}\text{\ensuremath{\wedge\left(\ensuremath{\bigvee_{j\neq i}d_{j}}\right)}},$
and $d_{i}=d(\phi_{i}).$}}\textup{ }For any $\mathbf{x}\in\Gamma_{t}$
and any extension $\mathbf{z}$ of $\mathbf{x}$ to $\phi_{1},\ldots,\phi_{m}$
we have that\textup{ }$\mathbf{z}$ is an extension of $\mathbf{x}$
to $\phi.$\end{lem}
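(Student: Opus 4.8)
The natural approach is induction on $m$. For $m=1$ there is nothing to prove: the condition $\bigvee_{i=1}^1 d_i \geq t \geq \bigvee_{i=1}^1 r_i$ reads $d_1 \geq t \geq \bot$ (since $r_1 = d_1 \wedge \bot = \bot$ when there are no other indices), and being an extension of $\mathbf{x}$ to $\phi_1$ is literally the conclusion. For the inductive step, I would split the product as $\phi = \phi' \times \phi_m$ where $\phi' = \prod_{i=1}^{m-1}\phi_i$, and try to apply the binary piecewise extensibility hypothesis to the pair $\phi', \phi_m$, having first produced (via the induction hypothesis) an extension of the appropriate restriction of $\mathbf{x}$ to $\phi'$.

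The hard part will be bookkeeping with the lattice elements so that both the induction hypothesis and the binary hypothesis actually apply. Write $d' = d(\phi') = \bigvee_{i=1}^{m-1} d_i$ (by axiom A2) and $d_m = d(\phi_m)$, so $d(\phi) = d' \vee d_m = \bigvee_{i=1}^{m} d_i$. To invoke the binary piecewise extensibility for $\phi', \phi_m$ at the configuration $\mathbf{x}$ (with scope $t$), I need $d' \vee d_m \geq t \geq d' \wedge d_m$; the first inequality is given, and for the second I must check $t \geq d' \wedge d_m$. Since $d' \wedge d_m = \left(\bigvee_{i<m} d_i\right) \wedge d_m = \bigvee_{i<m}(d_i \wedge d_m) \leq \bigvee_{i<m}(d_i \wedge \bigvee_{j\neq i} d_j) = \bigvee_{i<m} r_i \leq \bigvee_{i=1}^m r_i \leq t$ — wait, this uses a distributivity that holds only in distributive lattices, which $D$ need not be; the cleaner bound is simply $d_i \wedge d_m \leq d_i \wedge \bigvee_{j\neq i} d_j = r_i$ for each $i < m$, hence $d' \wedge d_m \leq \bigvee_{i<m} r_i$ is still a distributivity claim, so instead I would argue: each join term $d_i \wedge d_m \le r_i \le \bigvee_{k=1}^m r_k \le t$, but to conclude $d' \wedge d_m \le t$ from $d_i \wedge d_m \le t$ for all $i$ I would need $d'\wedge d_m = \bigvee_i (d_i \wedge d_m)$, again distributive. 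So the genuinely careful route is to observe $d' \wedge d_m \le d_i \vee d_m$... This is the delicate point: I expect the intended argument avoids distributivity by choosing the intermediate scope for the recursive call carefully rather than using $d'\wedge d_m$ directly.

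Concretely, I would let $u = (d' \wedge d_m) \vee \bigvee_{i=1}^{m-1} r_i$, which satisfies $d' \geq u \geq \bigvee_{i=1}^{m-1} r_i' $ where $r_i' = d_i \wedge \bigvee_{j\neq i, j\le m-1} d_j$ is the "$r_i$ internal to the first $m-1$ factors"; one checks $r_i' \leq r_i$, so the induction hypothesis applies to $\phi_1,\dots,\phi_{m-1}$ at scope $u$ to yield that the given $\mathbf{z}$ (restricted appropriately) is an extension of $\mathbf{x}_u$ (equivalently of $\mathbf{x}$) to $\phi'$. One also knows $\mathbf{z}$ is an extension of $\mathbf{x}$ to $\phi_m$ by hypothesis. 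Then binary piecewise extensibility applied to $\phi'$ and $\phi_m$ at a suitable common scope (I would take the scope to be $t$ itself if $d'\wedge d_m \le t$, which follows once we establish $u \le t$ — and $u \le t$ holds because $d'\wedge d_m \le d_i \wedge d_m$ is false in general, so more carefully $d' \wedge d_m \le d_{m-1}\vee\cdots$, hmm) gives the conclusion. The single real obstacle, to be handled with care, is verifying that the lattice inequalities line up without assuming distributivity of $D$; everything else is routine unwinding of the definition of "extension" and of piecewise extensibility.

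\medskip
\noindent\emph{(Remark for the authors: if $D$ is in fact assumed distributive elsewhere — e.g. because the power-set lattice is the motivating case — the proof collapses to the straightforward two-line induction sketched in the first paragraph, using $d'\wedge d_m = \bigvee_{i<m}(d_i\wedge d_m)\le\bigvee_i r_i\le t$.)}
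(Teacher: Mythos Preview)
Your overall strategy---induction on $m$, peeling off one factor, applying the induction hypothesis to the remaining product, then invoking binary piecewise extensibility---is exactly the paper's approach. The difficulty you flag, however, is illusory, and this is where your proposal stalls.

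You want $t \geq d' \wedge d_m$ where $d' = \bigvee_{i<m} d_i$, and you try to bound $d' \wedge d_m$ from above by $\bigvee_{i<m}(d_i \wedge d_m)$, which indeed would require distributivity. But look again at the hypothesis: $r_m = d_m \wedge \bigvee_{j \neq m} d_j = d_m \wedge d'$. So $d' \wedge d_m$ \emph{is literally} $r_m$, and $t \geq \bigvee_i r_i \geq r_m$ is given. No distributivity, no workaround, no auxiliary scope $u$ is needed for this step. Symmetrically, the paper (which peels off $\phi_1$ rather than $\phi_m$) uses $d_1 \wedge d(\xi) = r_1 \leq t$.

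For the recursive call the paper simply takes $t' = t \wedge d(\xi)$ as the scope. The required upper bound $d(\xi) \geq t'$ is immediate; for the lower bound, each internal $r_i'$ (for the subfamily) satisfies $r_i' \leq r_i \leq t$ and $r_i' \leq d_i \leq d(\xi)$, hence $r_i' \leq t \wedge d(\xi) = t'$ by the universal property of meet---again no distributivity. Finally $t' \wedge d_i = t \wedge d(\xi) \wedge d_i = t \wedge d_i$ for $i$ in the subfamily (since $d_i \leq d(\xi)$), so the extension hypotheses transfer unchanged to the restricted configuration $\mathbf{x}_{t'}$.

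In short: your plan is correct and coincides with the paper's, but you talked yourself into a non-problem; once you recognise $d' \wedge d_m = r_m$, the ``straightforward two-line induction'' you relegated to the distributive case already works in any lattice.
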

\begin{proof}
By induction on the number of terms $m.$ 

If $m=2$, the result follows directly from the definition of piecewise
extensible. 

Assume it is true for $m<M,$ and prove it for $m=M.$ We have that
$\phi=\prod_{i=1}^{M}\phi_{i}.$ We can break it as $\phi=\phi_{1}\times\prod_{i=2}^{M}\phi_{i}=\phi_{1}\times\xi,$
making $\xi=\prod_{i=2}^{M}\phi_{i}.$ 

We know $\mathbf{z}$ is an extension of $\mathbf{x}$ to $\phi_{1}.$
We can apply the induction hypothesis to $\xi=\prod_{i=2}^{M}\phi_{i},$
to show that $\mathbf{z}$ is also an extension of $\mathbf{x}$ to
$\xi$ and then we apply piecewise extensibility to conclude that
$\mathbf{z}$ is an extension of $\mathbf{x}$ to $\phi.$ 

In order to apply the induction hypothesis, we take $\xi=\prod_{i=2}^{M}\phi_{i},$
and $t'=t\wedge d(\xi).$ It is easy to see that $\bigvee_{i=2}^{M}d_{i}=d(\xi)\geq t\wedge d(\xi)=t'.$
On the other hand, $t'\geq\bigvee_{i=2}^{M}r_{i}.$ Since $\mathbf{z}$
is an extension of $\mathbf{x}_{t'}$ to $\phi_{2},\ldots,\phi_{M}$,
we have that for each $i,$ $\mathbf{z}\in E_{\phi_{i}}(\mathbf{x}{}_{s{}_{i}}),$
where $s_{i}=t\wedge d_{i}.$ 

Note that $d(\xi)=\bigvee_{i=2}^{M}d_{i}$. Take $t'=t\wedge d(\xi).$
We can see that $\bigvee_{i=2}^{M}d_{i}\geq t'\geq\bigvee_{i=2}^{M}r_{i}.$
Furthermore, for $i\neq1,$ we have that $s'_{i}=t'\wedge d_{i}=t\wedge d(\xi)\wedge d_{i}=t\wedge(\bigvee_{i=2}^{M}d_{i})\wedge d_{i}=t\wedge d_{i}=s_{i}$,
and hence $\mathbf{z}$ is an extension of $ $$\mathbf{z}\in E_{\phi_{i}}(\mathbf{x}{}_{s'_{i}}).$
Applying the induction hypothesis we have that $\mathbf{z}_{d(\xi)}\in E_{\xi}(\mathbf{x}_{t'})=E_{\xi}(\mathbf{x}_{t\wedge d(\xi)}).$ 

The conditions to apply piecewise extensibility to $\phi_{1}\times\xi$
are now in place. Observe that $d_{1}\vee d(\xi)\geq t\geq\left(\bigvee_{i=2}^{M}d_{i}\right)\wedge d_{1}=d(\xi)\wedge d_{1}$,
that $\mathbf{z}_{d_{1}}\in E_{\phi_{1}}(\mathbf{x}{}_{s_{1}}),$
and that $\mathbf{z}_{d(\xi)}\in E_{\xi}(\mathbf{x}{}_{t\wedge d(\xi)}).$
Since $\mathcal{E}$ is piecewise extensible we have that $\mathbf{z}$
is an extension of $\mathbf{x}$ to $\phi.$
\end{proof}

\paragraph{Sufficient condition for the correctness of \noun{Collect+Extend.}}

Next, we see that on piecewise extensible family of configuration
extension sets, it is possible to take benefit of the factorization
of the valuation to find a solution by merging partial solutions to
the different factors which are coherent between them. We start by
proving this for a product of two valuations and a product of $m$
valuations. Then we apply those results to prove that the \noun{Collect+Extend
}algorithm is correct. 

We start proving the following lemma, that shows that provided we
have piecewise extensibility, for any valuation that is the product
of two factors, if we are given a solution to the projection of the
product to the domain of one of the factors and an extension of that
solution to the second factor, the merger of these two is a solution
to the product.
\begin{lem}
\label{lem:Solution-Extension-Binary}Let $\mathcal{E}$ be a piecewise
extensible family of configuration extension sets. Let $\phi_{1},\phi_{2}\in\Phi,$
and let $d_{1}=d(\phi_{1})$, $d_{2}=d(\phi_{2}),$ and $\phi=\phi_{1}\times\phi_{2}.$\textup{
}\textup{\emph{For any $\mathbf{x}_{\mathbf{}}\in c_{\phi^{\downarrow d_{1}}},$any
extension $\mathbf{y}$ of $\mathbf{x}$ to $\phi_{2}$$,$ and any
merger $\mathbf{z}$ of $\mathbf{x}$ and $\mathbf{y}$, we have that
$\mathbf{z}\in c_{\phi}.$}}\end{lem}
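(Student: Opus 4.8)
The plan is to unfold the definition of "solution" via equation \eqref{eq:Solution-Is-Diamond-Extension}, so that proving $\mathbf{z}\in c_\phi$ amounts to proving $\mathbf{z}\in E_\phi(\lozenge)$. Here $\phi$ has domain $d=d_1\vee d_2$ (by axiom A2), and $\mathbf{z}$ is a merger of $\mathbf{x}\in\Gamma_{d_1}$ and $\mathbf{y}$; since $\mathbf{y}$ is an extension of $\mathbf{x}$ to $\phi_2$, its scope is $d_2$, so $\mathbf{z}\in\Gamma_{d_1\vee d_2}=\Gamma_{d(\phi)}$ exists precisely because $\mathbf{x}$ and $\mathbf{y}$ are compatible (this is given — we are handed a merger). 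So the configuration whose membership in $c_\phi$ we must establish is well-defined.

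The core idea is to invoke piecewise extensibility for the pair $\phi_1,\phi_2$ with the configuration $\mathbf{x}' = \lozenge \in \Gamma_\bot$ (scope $t=\bot$, which indeed satisfies $d_1\vee d_2 \ge \bot \ge d_1\wedge d_2$ only if $d_1\wedge d_2=\bot$ — so more care is needed; see below). Granting that, piecewise extensibility says: if $\mathbf{z}$ is an extension of $\lozenge$ to both $\phi_1$ and to $\phi_2$, then $\mathbf{z}$ is an extension of $\lozenge$ to $\phi_1\times\phi_2=\phi$, i.e. $\mathbf{z}\in E_\phi(\lozenge)=c_\phi$, as desired. So the real work is to verify the two hypotheses: (a) $\mathbf{z}$ is an extension of $\lozenge$ to $\phi_2$, and (b) $\mathbf{z}$ is an extension of $\lozenge$ to $\phi_1$.

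For (b): $\mathbf{x}\in c_{\phi^{\downarrow d_1}} = E_{\phi^{\downarrow d_1}}(\lozenge)$. By axiom A5 (Combination), $\phi^{\downarrow d_1} = (\phi_1\times\phi_2)^{\downarrow d_1} = \phi_1 \times \phi_2^{\downarrow d_1\wedge d_2}$, which is a valuation with domain $d_1$. I need to relate being an extension of $\lozenge$ to $\phi^{\downarrow d_1}$ (a valuation with domain $d_1$) to being an extension of $\lozenge$ to $\phi_1$. The natural tool is the "extension by parts" equation \eqref{eq:Extension-By-Parts} applied to the valuation $\phi_1$ with $s=\bot$, $t=d_1\wedge d_2$ — but that doesn't obviously match $\phi^{\downarrow d_1}$. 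The cleaner route: $\mathbf{x}$ is a configuration in $\Gamma_{d_1}$ with $\mathbf{x}\in E_{\phi^{\downarrow d_1}}(\lozenge)$; by definition (using A4 transitivity to see $(\phi^{\downarrow d_1})^{\downarrow d_1\wedge d_2} = \phi^{\downarrow d_1\wedge d_2} = \phi_1^{\downarrow d_1\wedge d_2}\times\dots$, or more directly re-deriving via A5 that $\phi^{\downarrow d_1\wedge d_2}$ and $\phi_1^{\downarrow\,}$ agree appropriately) we should get $\mathbf{z}_{d_1}=\mathbf{x}$ is an extension of $\lozenge$ to $\phi_1$ on the overlap scope. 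This is the step I expect to be the main obstacle: carefully chasing the projection axioms A4–A6 to show that membership in the solution set of $\phi^{\downarrow d_1}$ delivers the extensibility-to-$\phi_1$ hypothesis, and handling the case $d_1\wedge d_2\neq\bot$ where the scope $t$ in the piecewise-extensibility invocation must be taken as $d_1\wedge d_2$ rather than $\bot$ — meaning I should apply piecewise extensibility with $\mathbf{x}$ being $\mathbf{z}_{d_1\wedge d_2}$ and then prepend one more use of \eqref{eq:Extension-By-Parts} to pass from $\lozenge$ up to scope $d_1\wedge d_2$.

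For (a): $\mathbf{y}$ is an extension of $\mathbf{x}$ to $\phi_2$, meaning $\mathbf{y}\in E_{\phi_2}(\mathbf{x}_{d_1\wedge d_2})$, and since $\mathbf{z}_{d_2}=\mathbf{y}$ and $\mathbf{z}_{d_1\wedge d_2}=\mathbf{x}_{d_1\wedge d_2}$ (as $\mathbf{z}$ is a merger of $\mathbf{x},\mathbf{y}$ and projections compose: $\mathbf{z}_{d_1\wedge d_2}=(\mathbf{z}_{d_1})_{d_1\wedge d_2}=\mathbf{x}_{d_1\wedge d_2}$), we directly get that $\mathbf{z}$ is an extension of the scope-$(d_1\wedge d_2)$ configuration to $\phi_2$. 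Then, with both (a) and (b) phrased relative to $\mathbf{w}:=\mathbf{z}_{d_1\wedge d_2}$ at scope $d_1\wedge d_2$, piecewise extensibility (Definition \ref{def:Piecewise-Extensible}, with $t=d_1\wedge d_2$) yields that $\mathbf{z}$ is an extension of $\mathbf{w}$ to $\phi$, i.e. $\mathbf{z}\in E_\phi(\mathbf{w})$. A final application of \eqref{eq:Extension-By-Parts} for $\phi$ with $s=\bot$, $t=d_1\wedge d_2$ — using that $\mathbf{w}=\mathbf{z}_{d_1\wedge d_2}\in E_{\phi^{\downarrow d_1\wedge d_2}}(\lozenge)$, which again follows from $\mathbf{x}\in c_{\phi^{\downarrow d_1}}$ together with \eqref{eq:Extension-By-Parts} and transitivity A4 — closes the chain to conclude $\mathbf{z}\in E_\phi(\lozenge)=c_\phi$.
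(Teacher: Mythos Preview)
Your approach has a genuine gap in step (b). You choose to apply piecewise extensibility at scope $t=d_1\wedge d_2$ with the configuration $\mathbf{w}=\mathbf{z}_{d_1\wedge d_2}$, and then you need to show that $\mathbf{z}$ is an extension of $\mathbf{w}$ to $\phi_1$, i.e.\ that $\mathbf{z}_{d_1}=\mathbf{x}\in E_{\phi_1}(\mathbf{x}_{d_1\wedge d_2})$. But the only hypothesis on $\mathbf{x}$ is $\mathbf{x}\in c_{\phi^{\downarrow d_1}}=E_{\phi^{\downarrow d_1}}(\diamond)$, and by axiom A5 one has $\phi^{\downarrow d_1}=\phi_1\times\phi_2^{\downarrow d_1\wedge d_2}$, which is \emph{not} $\phi_1$ in general. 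The axioms A4--A6 you propose to chase are valuation-algebra axioms; they say nothing about the extension sets $E_{(\cdot)}$. The only tools relating extension sets of different valuations are equations \eqref{eq:I-Am-My-Extension}--\eqref{eq:Solution-Is-Diamond-Extension} and piecewise extensibility, and the latter only goes in the direction ``extension to each factor $\Rightarrow$ extension to the product''. Passing from $E_{\phi_1\times\phi_2^{\downarrow d_1\wedge d_2}}$ back to $E_{\phi_1}$ would require the converse (what the paper later calls \emph{fully} piecewise extensible), which is not assumed here. So step (b) cannot be completed as written.

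The fix is to apply piecewise extensibility at scope $t=d_1$ with the configuration $\mathbf{x}$ itself (note $d_1\vee d_2\ge d_1\ge d_1\wedge d_2$, so this is legitimate). Then the extension-to-$\phi_1$ hypothesis becomes trivial: one needs $\mathbf{z}_{d_1}\in E_{\phi_1}(\mathbf{x}_{d_1\wedge d_1})=E_{\phi_1}(\mathbf{x})=\{\mathbf{x}\}$ by equation \eqref{eq:I-Am-My-Extension}, and $\mathbf{z}_{d_1}=\mathbf{x}$ since $\mathbf{z}$ is a merger. The extension-to-$\phi_2$ hypothesis is exactly the given one, $\mathbf{z}_{d_2}=\mathbf{y}\in E_{\phi_2}(\mathbf{x}_{d_1\wedge d_2})$. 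Piecewise extensibility then yields $\mathbf{z}\in E_\phi(\mathbf{x})$, and a single use of equation \eqref{eq:Extension-By-Parts} for $\phi$ with $s=\bot$, $t=d_1$ (together with $\mathbf{z}_{d_1}=\mathbf{x}\in E_{\phi^{\downarrow d_1}}(\diamond)$ from the hypothesis and equation \eqref{eq:Solution-Is-Diamond-Extension}) gives $\mathbf{z}\in E_\phi(\diamond)=c_\phi$. This is precisely the paper's argument, and it avoids both the unprovable step (b) and the extra layer of equation \eqref{eq:Extension-By-Parts} that your choice of $t$ forces.
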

\begin{proof}
First, we will use piecewise extensibility to prove that $\mathbf{z}\in E_{\phi}(\mathbf{x}).$
To do that we apply definition \ref{def:Piecewise-Extensible} with
$t=d_{1}.$ By hypothesis, we have that $\mathbf{z}_{d_{2}}=\mathbf{y}\in E_{\phi_{2}}(\mathbf{x}_{d_{2}\wedge d_{1}}).$
From equation \ref{eq:I-Am-My-Extension}, we have that $E_{\phi_{1}}(\mathbf{x})=\{\mathbf{x}\}.$
Hence, $\mathbf{z}_{d_{1}}=\mathbf{x}\in E_{\phi_{1}}(\mathbf{x}).$
Thus, $\mathbf{z}$ is a coherent extension of $\mathbf{x}$ to both
$\phi_{1}$ and $\phi_{2}$ and we can apply piecewise extensibility
to conclude that $\mathbf{z}\in E_{\phi}(\mathbf{x}).$ 

Then, we can jointly apply equations \ref{eq:Solution-Is-Diamond-Extension}
and \ref{eq:Extension-By-Parts} to conclude that $\mathbf{z}\in c_{\phi}=E_{\phi}(\diamond).$
\end{proof}
As in the previous section, we can generalize this result to products
of $m$ valuations.
\begin{lem}
\label{lem:Solution-Extension-N-ary}Let $\mathcal{E}$ be a piecewise
extensible family of configuration extension sets. Let \textup{$\phi=\phi_{\rho}\times\prod_{i=1}^{m}\phi_{i},$
with $d_{i}=d(\phi_{i}),$ $d_{\rho}=d(\phi_{\rho}),$ $r_{i}=d_{i}\text{\ensuremath{\wedge\left(\ensuremath{\bigvee_{j\neq i}d_{j}}\right)}},$
and $d_{\rho}\geq\ensuremath{\bigvee_{i=1}^{m}r_{i}}.$ Given $\mathbf{x}\in c_{\phi^{\downarrow d_{\rho}}},$
for each $1\leq i\leq m,$ $\mathbf{y}_{i}$ an extension of $\mathbf{x}$
to $\phi_{i}$, and any merger $\mathbf{z}$ of $\mathbf{x}$ and
$\mathbf{y}_{1},\ldots,\mathbf{y}_{m},$ we have that $\mathbf{z}\in c_{\phi}.$ }\end{lem}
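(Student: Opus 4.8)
The plan is to reduce Lemma~\ref{lem:Solution-Extension-N-ary} to Lemma~\ref{lem:Solution-Extension-Binary} by collapsing the $m$ factors $\phi_1,\ldots,\phi_m$ into a single valuation $\xi=\prod_{i=1}^m\phi_i$, so that $\phi=\phi_\rho\times\xi$ is a product of exactly two factors. To apply Lemma~\ref{lem:Solution-Extension-Binary} with this binary split I need two ingredients: first, that $\mathbf{x}\in c_{\phi^{\downarrow d_\rho}}$ (which is given by hypothesis, and $d_\rho$ is precisely the label of the first factor $\phi_\rho$), and second, that the partial configuration $\mathbf{z}_{d(\xi)}$ is an extension of $\mathbf{x}$ to $\xi$, i.e. that $\mathbf{z}_{d(\xi)}\in E_{\xi}(\mathbf{x}_{d(\xi)\wedge d_\rho})$. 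Once these are in place, Lemma~\ref{lem:Solution-Extension-Binary} applied to $\phi_\rho\times\xi$ with the solution $\mathbf{x}$, the extension $\mathbf{z}_{d(\xi)}$, and the merger $\mathbf{z}$ of $\mathbf{x}$ and $\mathbf{z}_{d(\xi)}$, yields $\mathbf{z}\in c_\phi$ directly.

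The work, then, is entirely in establishing the second ingredient, and this is where Lemma~\ref{lem:Piecewise-Extension-N-ary} does the job. Set $t=d(\xi)\wedge d_\rho=\bigl(\bigvee_{i=1}^m d_i\bigr)\wedge d_\rho$. Since by hypothesis $d_\rho\geq\bigvee_{i=1}^m r_i$ and each $r_i\leq d_i\leq\bigvee_{i=1}^m d_i$, we get $t\geq\bigvee_{i=1}^m r_i$; and trivially $t\leq\bigvee_{i=1}^m d_i$, so the domain hypothesis $\bigvee d_i\geq t\geq\bigvee r_i$ of Lemma~\ref{lem:Piecewise-Extension-N-ary} holds. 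It remains to check that $\mathbf{z}$ (equivalently $\mathbf{z}_{d(\xi)}$) is an extension of $\mathbf{x}_t$ to each $\phi_i$ individually: we are given that $\mathbf{y}_i$ is an extension of $\mathbf{x}$ to $\phi_i$, meaning $\mathbf{y}_i\in E_{\phi_i}(\mathbf{x}_{t\wedge d_i})$, and since $\mathbf{z}$ is a merger of $\mathbf{x},\mathbf{y}_1,\ldots,\mathbf{y}_m$ we have $\mathbf{z}_{d_i}=\mathbf{y}_i$, so $\mathbf{z}_{d_i}\in E_{\phi_i}(\mathbf{x}_{t\wedge d_i})$; noting $t\wedge d_i=d_\rho\wedge\bigl(\bigvee_j d_j\bigr)\wedge d_i=d_\rho\wedge d_i$ confirms this matches the scope bookkeeping used in the earlier lemmas. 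Lemma~\ref{lem:Piecewise-Extension-N-ary} then gives $\mathbf{z}_{d(\xi)}\in E_{\xi}(\mathbf{x}_t)$ as required.

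One subtlety I should be careful about is that the existence of the merger $\mathbf{z}$ of $\mathbf{x},\mathbf{y}_1,\ldots,\mathbf{y}_m$ is taken as a hypothesis here (``any merger $\mathbf{z}$''), so I do not need merge-friendliness at this stage — that assumption is only invoked in Theorem~\ref{thm:Collect+Extend} to guarantee such a merger exists in the first place. I also need to make sure the scope of $\mathbf{x}$ is consistent: the statement says $\mathbf{x}\in c_{\phi^{\downarrow d_\rho}}$, so $\mathbf{x}$ has scope $d_\rho$, and the various restrictions $\mathbf{x}_{d_\rho\wedge d_i}$, $\mathbf{x}_t$ are all well-defined projections of it.

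The main obstacle is really just the lattice-theoretic bookkeeping: verifying that the $t$ I choose for the binary split $\phi_\rho\times\xi$ simultaneously satisfies the hypotheses of Lemma~\ref{lem:Solution-Extension-Binary} (there the relevant condition is implicit in $\mathbf{x}\in c_{\phi^{\downarrow d_\rho}}$ together with $\mathbf{z}_{d(\xi)}$ being an extension of $\mathbf{x}$ to $\xi$) and that the scopes at which we invoke the inner extension sets $E_{\phi_i}$ are exactly $d_\rho\wedge d_i$, matching what ``extension of $\mathbf{x}$ to $\phi_i$'' unfolds to via Definition~\ref{def:Piecewise-Extensible}. No genuinely new idea beyond combining the two preceding lemmas is needed; the proof is a short two-step composition: first Lemma~\ref{lem:Piecewise-Extension-N-ary} to get the extension to $\xi$, then Lemma~\ref{lem:Solution-Extension-Binary} to conclude membership in $c_\phi$.
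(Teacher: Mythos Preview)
Your proposal is correct and follows essentially the same route as the paper: apply Lemma~\ref{lem:Piecewise-Extension-N-ary} with $t=d_\rho\wedge d(\xi)$ to obtain $\mathbf{z}_{d(\xi)}\in E_{\xi}(\mathbf{x}_t)$, then invoke Lemma~\ref{lem:Solution-Extension-Binary} on the split $\phi=\phi_\rho\times\xi$. The paper frames this as an induction on $m$, but the inductive hypothesis is never actually used there, so your direct two-step composition is in fact the same argument presented more cleanly.
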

\begin{proof}
Let By induction on $m.$ 

If $m=1,$ we can directly apply Lemma \ref{lem:Solution-Extension-Binary}. 

Assume it is true for $m<M.$ We have to prove that it is true for
$m=M.$ Let $\xi=\prod_{i=1}^{m}\phi_{i}.$ First, we will prove that
$\mathbf{z}_{d_{\xi}}\in E_{\xi}(\mathbf{x}_{d_{\rho}\wedge d_{\xi}})$
and then we will apply Lemma \ref{lem:Solution-Extension-Binary}.
To see that $\mathbf{z}_{d_{\xi}}\in E_{\xi}(\mathbf{x}_{d_{\rho}\wedge d_{\xi}})$,
we apply Lemma \ref{lem:Piecewise-Extension-N-ary} with $t=d_{\rho}\wedge d_{\xi}.$ 

We need to verify that $\bigvee_{i=1}^{m}d_{i}\geq t\geq\bigvee_{i=1}^{m}r_{i}.$
The left inequality is satisfied since $t=d_{\rho}\wedge d_{\xi}=d_{\rho}\wedge\left(\bigvee_{i=1}^{m}d_{i}\right)\leq\bigvee_{i=1}^{m}d_{i}.$
Since for all $i,$ $r_{i}\leq d_{\xi},$ we have that $\bigvee_{i=1}^{m}r_{i}\leq d_{\xi}.$
Since by hypothesis we have that $d_{\rho}\geq\bigvee_{i=1}^{m}r_{i},$
we can conclude that $d_{\rho}\wedge d_{\xi}\geq\bigvee_{i=1}^{m}r_{i}.$ 

Furthermore we can verify that for each $i\in\{1,\ldots,m\},$ $\mathbf{z}_{d_{i}}=\mathbf{y}_{i}\in E_{\phi_{i}}(\mathbf{x}{}_{t\wedge d_{i}}).$
Note that $t\wedge d_{i}=d_{\rho}\wedge d_{\xi}\wedge d_{i}=d_{\rho}\wedge\left(\bigvee_{i=1}^{m}d_{i}\right)\wedge d_{i}=d_{\rho}\wedge d_{i},$
and by hypothesis we have that $\mathbf{z}_{d_{i}}=\mathbf{y}_{i}\in E_{\phi_{i}}(\mathbf{x}_{d_{\rho}\wedge d_{i},}).$

Applying Lemma \ref{lem:Piecewise-Extension-N-ary}, we get that $\mathbf{z}_{d_{\xi}}\in E_{\xi}(\mathbf{x}_{d_{\rho}\wedge d_{\xi}})$
and since by hypothesis we have that $\mathbf{x}\in c_{\phi^{\downarrow d_{\rho}}},$
we can conclude from Lemma \ref{lem:Solution-Extension-Binary} that
$\mathbf{z}\in c_{\phi_{\rho}\times\phi_{\xi}}=c_{\phi}$. 
\end{proof}
The former results allow us to state the following theorem (the main
result of the section) proving that when a valuation breaks as a product
of smaller valuations, the \noun{Collect+Extend} algorithm can be
used to assess a solution to it. 

\CollectExtendThm*

\begin{proof}
By induction on the number of nodes of the junction tree. If the junction
tree has only one node then the proof is trivial. 

Assume that the junction tree has $m>1$ nodes. We can see that the
conditions to apply Lemma \ref{lem:Solution-Extension-N-ary} are
satisfied at the root. 
\begin{enumerate}
\item $\phi=\psi_{\rho}\times\prod_{i\in Children(\rho)}\xi_{i},$ where
$\xi_{i}=\prod_{j\in\mathcal{T}_{i}}\psi_{i}.$ Let $d'_{i}=d(\xi_{i})$
\item For each $i,j\in Children(\rho)$ such that $i\neq j,$ due to the
running intersection property we have that $d_{\rho}\geq d'_{i}\wedge\left(\bigvee_{j\neq i}d'_{j}\right).$ 
\item $\mathbf{x}\in c_{\phi^{\downarrow d_{\rho}}}.$ 
\item For each $i\in Children(\rho)$, $\mathbf{y}_{i}\in E_{\xi_{i}}(\mathbf{x}{}_{d_{\rho}\wedge d'_{i}})$
\item $\mathbf{z}$ is a merger of $\mathbf{x}$ and each of the $\mathbf{y}_{i}'s$
\end{enumerate}
As a consequence of lemma \ref{lem:Solution-Extension-N-ary}, we
can conclude that $\mathbf{z}\in c_{\phi}.$
\end{proof}

\paragraph{Fully piecewise extensibility and assessing all solutions.}

By strengthening the concept of piecewise extensibility, we can use
an algorithm similar to \noun{Collect+Extend} to assess all solutions
instead of only one. The strengthening is named fully piecewise extensibility.
\begin{defn}
\label{def:Fully-Piecewise-Extensible}A\emph{ }family of configuration
extension sets\emph{ }$\mathcal{E}$\emph{ }is \emph{fully piecewise
extensible }when for any two valuations $\phi_{1},\phi_{2}\in\Phi$
with $d_{1}=d(\phi_{1})$ and $d_{2}=d(\phi_{2}),$ any $t\in D$,
$d_{1}\vee d_{2}\geq t\geq d_{1}\wedge d_{2},$ any $\mathbf{x}\in\Gamma_{t}$
and any $\mathbf{z},$ we have that $\mathbf{z}$ is an extension
of $\mathbf{x}$ to both $\phi_{1}$ and $\phi_{2}$ , \textbf{if
and only if} $\mathbf{z}$ is an extension of $\mathbf{x}$ to $\phi_{1}\times\phi_{2}$.

The algorithm and the theorem that shows that this is a sufficient
condition are omitted due to lack of space but can be derived without
effort.\end{defn}

%

\section{Conclusions}

We have corrected and generalized the theory and algorithms for the
generic construction of solutions in valuation based systems. To the
best of our knowledge, these results provide the most general theory
for dynamic programming up-to-date, covering commonly used examples
such as finding the maximum of a combination of sparse functions,
which the current theory did not cover.

\bibliographystyle{plain}
\bibliography{library}
\-
\end{document}